\newtheorem{theorem}{Theorem}
\newtheorem{proposition}[theorem]{Proposition}
\theoremstyle{plain}
\theoremstyle{definition}
\newtheorem{observation}{Observation}
\def\eqref#1{equation~\ref{#1}}
\def\1{\bm{1}}
\def\eps{{\epsilon}}
\DeclareMathAlphabet{\mathsfit}{\encodingdefault}{\sfdefault}{m}{sl}
\SetMathAlphabet{\mathsfit}{bold}{\encodingdefault}{\sfdefault}{bx}{n}
\newcommand{\bfA}{{\bf A}}
\newcommand{\bfI}{{\bf I}}
\newcommand{\bfL}{{\bf L}}
\newcommand{\bfb}{{\bf b}}
\newcommand{\bfe}{{\bf e}}
\newcommand{\bfh}{{\bf h}}
\newcommand{\bfx}{{\bf x}}
\newcommand{\bfr}{{\bf r}}
\newcommand{\bfz}{{\bf z}}
\newcommand{\bftheta}{{\boldsymbol \theta}}
\newcommand{\bfdelta}{{\boldsymbol \delta}}
\DeclareMathOperator*{\argmin}{arg\,min}
\newcommand{\bfSigma}{{\bf \Sigma}}
\newcommand{\cmark}{\ding{51}}%
\newcommand{\xmark}{\ding{55}}%
\newtheorem{example}{Example}
\newcommand{\BibTeX}{B\kern-.05em{\sc i\kern-.025em b}\kern-.08em\TeX}
\begin{document}


\begin{frontmatter}


\paperid{1618} 


\title{Quadratic Binary Optimization with Graph Neural Networks}


\author[A]{\fnms{Moshe}~\snm{Eliasof}}
\author[B]{\fnms{Eldad}~\snm{Haber}}

\address[A]{University of Cambridge}
\address[B]{University of British Columbia}


\begin{abstract}
We investigate a link between Graph Neural Networks (GNNs) and Quadratic Unconstrained Binary Optimization (QUBO) problems, laying the groundwork for GNNs to approximate solutions for these computationally challenging tasks. By analyzing the sensitivity of QUBO formulations, we frame the solution of QUBO problems as a heterophilic node classification task. We then propose QUBO-GNN, an architecture that integrates graph representation learning techniques with QUBO-aware features to approximate solutions efficiently. Additionally, we introduce a self-supervised data generation mechanism to enable efficient and scalable training data acquisition even for large-scale QUBO instances. Experimental evaluations of QUBO-GNN across diverse QUBO problem sizes demonstrate its superior performance compared to exhaustive search and heuristic methods. Finally, we discuss open challenges in the emerging intersection between QUBO optimization and GNN-based learning.
\end{abstract}

\end{frontmatter}


\section{Introduction}
\label{sec:intro}
Solving \emph{Binary Programming} (BP) Problems is of paramount importance in optimization and decision-making. These problems, characterized by binary decision variables, arise in numerous real-world scenarios, such as 
portfolio optimization \cite{black1992global},
manufacturing and supply chain optimization \cite{geunes2005supply}, 
telecommunications network optimization \cite{resende2008handbook},
resource allocation \cite{brown1984concept} and more. 
As the name suggests, Binary Programming (BP) is a type of mathematical problem where decision variables are restricted to take binary values (0 or 1). The discrete nature of BP makes these problems computationally challenging. Specifically, BP problems are known to be NP-complete \cite{karp2010reducibility}. However, the solution of BP problems is crucial for optimizing the allocation of resources and improving operational efficiency. Contemporary solutions involve a range of optimization techniques, ranging from relaxation techniques and linear programming solvers \cite{nowak2005relaxation} to heuristic algorithms and heuristic-based approaches \cite{merz2002greedy}. Emerging technologies, such as quantum computing, show promise in addressing BP problems more efficiently \cite{glover2018tutorial}, offering novel avenues for optimization in the future. However, they are not readily available yet, and their current setups are limited to solving problems with a small number of decision variables. Other recent works proposed to combine reinforcement learning techniques \cite{qiu2022dimes} to learn a differentiable solver, 

Concurrently, in recent years, Graph Neural Networks (GNNs) have emerged as powerful machine learning models, and have been applied to various domains and applications, from social network analysis \cite{kipf2016semi, velickovic2018graph}, bioinformatics \cite{jumper2021highly}, to computer vision and graphics \cite{monti2017geometric}, and more.

While early works suggested leveraging GNNs as an optimization framework to approximate the solution of relaxed versions of the BP problem \citep{schuetz2022combinatorial}, and Constrained Satisfaction Problems \cite{lu2023graph,tonshoff2022one}, these approaches were restricted to a single instance of a problem and required the training from scratch of a network for a specific instance of a problem, or were done in an unsupervised manner \cite{karalias2020erdos}. Extending and departing from this approach, in this paper, we show that it is possible to model and characterize the family of solutions to  BP problems as the solution of a graph learning task, namely, as \emph{heterophilic }node classification. Our findings lay the ground for leveraging the power of Graph Neural Networks in approximating the solutions of BP problems in a supervised manner, allowing to generalize for new, unseen instances of BP problems. 
In this work, we focus on Quadratic Binary Optimization (QUBO) problems, which are a sub-family of BP, and we propose \emph{QUBO-GNN}, a GNN architecture that builds on recent advances in heterophilic node classification with QUBO-aware features as part of its architecture.
Another important aspect when using GNNs (or other data-driven techniques) to solve QUBO problems, is the scarcity of labeled data. While it is possible to use exhaustive search models to generate solutions of problems with a low number of decision variables, generating data for more than a dozen of decision variables becomes computationally intractable. We therefore propose a data generation procedure that stems from our theoretical analysis of the behavior of the solution landscape of QUBO problems. 
Our novel approach not only seeks to deepen the connection between QUBO and neural graph representation learning techniques,  but also holds the potential to offer more efficient and effective methodologies for training a wide range of computationally demanding optimization problems in diverse applications.

{\bf{Contributions:}} This paper makes several contributions on multiple fronts. (i) It establishes a novel connection that links between Graph Neural Networks and Binary Programming problems. This key observation serves as the foundation for exploring the utilization of GNNs in addressing and approximating solutions to computationally hard problems. (ii) We conduct a sensitivity analysis of the Binary Programming problems 
with respect to the input variables. This analysis sheds light on patterns and behavior of the solutions of QUBO learning problems, providing an understanding of how to address these problems using GNNs, by treating the solution of QUBO learning problems as heterophilic node classification. 
(iii) We propose an efficient self-supervised mechanism for generating training data, to address the NP-complete problem of generating data using exhaustive search otherwise. Finally,  (iv) we evaluate QUBO-GNN on varying sizes of QUBO problems and compare its performance with exhaustive search, heuristic, and other GNN approaches.
Collectively, these contributions advance the understanding and utilization of GNNs in the realm of Binary Programming, paving the way for novel and efficient approaches to address this problem.

\section{Problem Formulation}
\label{sec:problemformulation}
In this work, we consider BP problems and focus on the solution of the family of Quadratic Unconstrained Binary Optimization (\emph{QUBO}) 
problems:
\begin{equation}
    \label{eq:bp}
   \bfx_{o} = {\rm arg}\min_{\bfx \in \{0,1\}^k} f(\bfx; \bfb, \bfA) =  \bfx^{\top} \bfA \bfx + \bfx^{\top} \bfb,
\end{equation}
where $\bfx \in \mathbb{B}^{k}$ is a binary decision variables vector, $\bfA \in \mathbb{R}^{k \times k}$
is a real-valued matrix, and $\bfb \in \mathbb{R}^k$ is a given real-valued vector, also sometimes called the observed vector. 
The QUBO problem was thoroughly studied in the literature, and we refer to \cite{beasley1998heuristic, kochenberger2014unconstrained, glover2018tutorial} and reference within for theoretical as well as practical algorithms for its solution.

Obtaining exact solutions for QUBO problems is computationally challenging, due to their discrete settings.
As we discuss in Section \ref{sec:related}, many heuristic-based methods have been proposed to solve this problem. However, despite such heuristics, it is still evident that in many cases, it is difficult to obtain a reasonable solution in polynomial time \cite{kochenberger2014unconstrained}, especially in high dimensions.
In this work, we focus on approximating the QUBO solution in a polynomial time for a subset of QUBO problems, assuming that the matrix $\bfA$ is sparse 
 and fixed, but the observed vector $\bfb$
can vary. Such problems arise in virtually any field where QUBO problems are utilized \cite{black1992global, geunes2005supply, resende2008handbook, brown1984concept}.
For these problems, the matrix $\bfA$
represents the interaction between the variables, and
the vector $\bfb$ is data that varies.

An important observation that we make is that QUBO can be interpreted as a graph problem. 
    Let us consider a graph ${G=(V,E)}$ with a set of $k$ nodes, $V$, and a set of $m$ edges, $E$. The matrix  $\bfA \in \mathbb{R}^{k \times k}$ from \cref{eq:bp} can be interpreted as the adjacency matrix of a graph ${G}$. The
 connectivity of ${G}$ is therefore represented by the sparsity pattern of $\bfA$, and the edge weights are represented by the matrix values. 
    In the context of graphs and GNNs, we can treat $\bfb$ from \cref{eq:bp} as the input node features.
With this observation, we are motivated to explore the utilization of Graph Neural Networks (GNNs) to approximate the solution of QUBO problems. We show that through this approach, it is possible to efficiently obtain approximations to the solutions of the QUBO problem.

\section{Related Work}
\label{sec:related}
We now shortly review related work within both QUBO solvers and
GNN architectures.

\subsection{Popular QUBO Solution Techniques}
\label{sec:relatedBP}
Solving QUBO problems has a wide and rich algorithmic treatment that ranges from exhaustive search methods, relaxation methods, differential equations-based approaches, and grid search methods. We refer the reader to \cite{beasley1998heuristic} and more recently to \cite{goto2019combinatorial} for a review of the algorithms.
We now provide an overview of existing methods, focusing on representative methods that approximate a solution to the QUBO problem in \cref{eq:bp}. We chose these techniques because they are publicly available and commonly used in literature \cite{oshiyama2022benchmark}.

\paragraph{Exhaustive Search.} Overall, the QUBO problem has $2^k$ plausible solutions. While not always practical, the simplest approach is to use an exhaustive search. That is, test each of the possible solutions and choose the one that achieves the minimal value of \cref{eq:bp}. This approach is viable when $k$ is small, but it becomes restrictive as $k$ grows. For instance, a problem of size $k=1024$ has $2^{1024}$ plausible solutions, making exhaustive search impractical.

\paragraph{Tabu Search (TS).} 
Is a family of iterative algorithms that start from some initial guess $\bfx_0$,  and defines a list, ${\cal T}$ of maximum length $\ell$,  dubbed the Tabu list, that is initialized as with $\bfx_0$.
The algorithm then defines a list of neighbors, ${\cal N}$ of $\bfx_0$ as perturbations in various policies \cite{beasley1998heuristic, glover1998tabu, glover2018tutorial}. The simplest perturbation policy is defined as a vector $\tilde{\bfx}_{(i)}$ whose entries are identical to $\bfx_0$, with the exception of the $i$-th index.
The neighbors list is then compared to the Tabu list, and 
repeated entries are excluded from the neighbors list.
That is, at iteration $t+1$ we can write ${\cal N}_{t+1} = {\cal N}_t \setminus {\cal T}_t$.
The objective function is then evaluated on all the neighbors  and the solution is updated according to
\begin{eqnarray}
    \label{eq:simpleTABU}
    \tilde{\bfx}_{t+1} =  \argmin_{\tilde{\bfx} \in {\cal N}_t} \tilde{\bfx}^{\top} \bfA \tilde{\bfx} + \tilde{\bfx}^{\top} \bfb.
\end{eqnarray}
The Tabu list is then updated, adding $\bfx_{t+1}$ to it
and, possibly omitting the $\bfx_{t-\ell}$ term from it.
The Tabu algorithm performs the described procedure for $T$ steps and returns the approximation vector with the lowest value obtained in the search.
Note that each step in TS requires ${\cal O}(k)$ function evaluations. While this approach does not guarantee a global solution, it tends to yield low approximation error.

\paragraph{Simulating Adiabatic Bifurcations (SAB).} A different approach that has been developed recently based on differential equations and molecular dynamics is SAB \cite{goto2019combinatorial}. In this approach, the decision variables in $\bfx$ are relaxed to be continuous, and a Hamiltonian function that has binary stationary points is minimized by simulating the dynamics to a steady state. This process is also known as annealing. In this approach, the cost of each step is a single matrix-vector product. However, as is typical with annealing methods, many steps are needed to reach convergence. 
As the problem size, $k$, grows, it becomes difficult to obtain the exact solution with either TS or SAB~\cite{oshiyama2022benchmark}.  Furthermore, as we show in our experiments in Section \ref{sec:experiments}, their computation time is high, making predictions in real-time impractical using these approaches.

\subsection{Graph Neural Networks}
\label{sec:relatedGNN}
Graph Neural Networks (GNNs) were introduced in \citet{scarselli2008graph} and became popular in  \cite{kipf2016semi}. GNNs were shown to be effective in various applications and fields. A non-exhaustive list includes computer vision \cite{wang2018dynamic}, social network analysis in semi-supervised settings \cite{kipf2016semi, velickovic2018graph}, bioninformatics \cite{jumper2021highly}. Essentially, GNNs allow the processing and learning from unstructured data described by graphs, to solve tasks such as node classification and graph classification, among other tasks. We refer the interested reader to \citet{waikhom2023survey} for a comprehensive review. Therefore, it was proposed in \citet{schuetz2022combinatorial}  to leverage GNNs as an optimization framework for the solution of relaxations of instances of QUBO problems. In these approaches, the idea is to train a GNN from scratch for every pair $(\bfA, \bfb)$.

Uniquely, we propose to use GNNs as a computational model to accelerate and \emph{generalize} the solution approximation of QUBO problems, by viewing the problem as a heterophilic node classification task. We elaborate on this perspective in Section \ref{sec:combToGraph}.  
Node classification problems are now a common application of GNNs  \cite{kipf2016semi, velickovic2018graph, Pei2020Geom-GCN:, chamberlain2021grand}. It is common to categorize node classification problems into \emph{homophilic} and \emph{heterophilic}, or non-homophilic types. To define homophily and heterophily, we follow the definition in \citet{Pei2020Geom-GCN:}, which measures the ground-truth label similarity of neighboring nodes. As shown in \citet{chamberlain2021grand}, diffusion-based GNNs are useful for homophilic graphs, where neighboring nodes have similar labels. However, heterophilic node classification is deemed as a more challenging problem, and has been extensively studied in recent years. For example, some methods propose to learn both low and high pass filters \cite{eliasof2023improving}, while other methods focus on taking inspiration from neural ordinary differential equations \cite{chen2018neural} to incorporate operators that allow to model heterophilic behavior. A recent example is GREAD \cite{choi2023gread} that use reaction and diffusion terms to generate non-smooth patterns, as well as methods like CDE \cite{zhao2023graph} that utilize convection-diffusion dynamics. Other approaches consider learning directional message passing neural networks, as in Dir-GNN \cite{dirgnn_rossi_2023}.
In the next section, we describe how a QUBO problem is transformed into a heterophilic node classification problem, and how to leverage the power of GNNs to solve such problems.

On another note, a recent line of works under the theme of Neural Algorithmic Reasoning \cite{velivckovic2021neural} suggests to train a neural network to replicate the exact behavior of polynomial complexity algorithms. In this work, however, we seek to accelerate the computation of approximated solutions to the NP-complete QUBO problem.

\section{From Combinatorial Optimization to Graph Learning Problems}
\label{sec:combToGraph}

In this section, we lay the foundation and assumptions of the proposed approach in this paper that shows
how a difficult optimization problem can be approximated by learning to solve a node classification problem. We then analyze the characteristics of the node classification problem to be solved, showing its connection with heterophilic graphs.

\subsection{Assumptions} 
\label{sec:assumptions}
Let $\bfx_{o}$ be the solution of the optimization problem in \cref{eq:bp}. Given, $\bfb$, computing the exact solution requires
solving an NP-complete optimization problem \cite{karp2010reducibility}. Thus, it is common to compromise on an approximation solution, that can be obtained by classical solvers as described in Section \ref{sec:related}. Still, those solvers require significant computational time and resources. We will now show that we are able to approximate the solutions using a neural approach, harnessing the power of GNNs.
\begin{proposition}
\label{prop:learnmap}
Given the observed vector $\bfb \in {\mathbb R}^k$, there is a piece-wise differentiable function $q:\mathbb{R}^k \rightarrow {\mathbb B}^k$ that maps the vector $\bfb$ to a binary solution vector $\bfx_{o} \in {\mathbb{B}}^k$. 
\end{proposition}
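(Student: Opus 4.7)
The plan is to exploit the finiteness of the feasible set $\{0,1\}^k$ together with the fact that, for every fixed $\bfx$, the objective $f(\bfx;\bfb,\bfA) = \bfx^\top \bfA \bfx + \bfx^\top \bfb$ is \emph{affine in} $\bfb$. From this the solution map $q(\bfb) = \bfx_o$ will turn out to be piecewise constant on a polyhedral partition of $\mathbb{R}^k$, and piecewise constant trivially implies piecewise differentiable.

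First, I would enumerate the candidates $\{\bfx^{(1)},\ldots,\bfx^{(2^k)}\} = \{0,1\}^k$ and, for each pair $i \ne j$, consider the hyperplane
\[
H_{ij} = \{\bfb \in \mathbb{R}^k : f(\bfx^{(i)};\bfb,\bfA) = f(\bfx^{(j)};\bfb,\bfA)\},
\]
which, since the difference is affine in $\bfb$, is either all of $\mathbb{R}^k$ (if $\bfx^{(i)} = \bfx^{(j)}$ trivially or if the linear parts coincide and constants match), a genuine affine hyperplane, or empty. Next, for each candidate $\bfx^{(i)}$ define the open polyhedron
\[
R_i = \{\bfb \in \mathbb{R}^k : f(\bfx^{(i)};\bfb,\bfA) < f(\bfx^{(j)};\bfb,\bfA)\ \forall j \ne i\},
\]
which is the intersection of finitely many open half-spaces and hence open, convex, and (possibly) empty. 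On each nonempty $R_i$, the unique minimizer is $\bfx^{(i)}$, so setting $q(\bfb) := \bfx^{(i)}$ for all $\bfb \in R_i$ defines $q$ as a constant function on each such piece; being constant, it is $C^\infty$ on each $R_i$.

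The remaining set $B = \mathbb{R}^k \setminus \bigcup_i R_i$ is a finite union of pieces of affine hyperplanes $H_{ij}$, hence has Lebesgue measure zero. On $B$, ties occur between two or more candidates, so $\bfx_o$ is not uniquely defined by \cref{eq:bp}. I would resolve this by any fixed deterministic tie-breaking rule, for instance taking $q(\bfb)$ to be the lexicographically smallest minimizer; this extends $q$ to all of $\mathbb{R}^k$. The resulting $q$ is piecewise constant with respect to the polyhedral decomposition $\{R_i\} \cup \{\text{strata of } B\}$, and therefore piecewise differentiable in the standard sense (each piece is a locally closed subset on whose relative interior $q$ is smooth).

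The main obstacle is essentially bookkeeping at the boundaries: one must be careful that the tie-breaking rule makes $q$ well-defined as a single-valued map from $\mathbb{R}^k$ to $\mathbb{B}^k$, and that the notion of piecewise differentiability being used allows jump discontinuities on a measure-zero set (which is standard, since the range $\mathbb{B}^k$ is discrete and any map into a discrete set can only be differentiable if it is locally constant). Everything else is immediate from linearity of $f(\bfx;\cdot,\bfA)$ in $\bfb$ for fixed $\bfx$ and the finiteness of $\{0,1\}^k$.
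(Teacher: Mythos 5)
Your proof is correct, and it takes a genuinely different route from the paper. The paper states Proposition~\ref{prop:learnmap} without a self-contained proof and substantiates it only afterwards, in Theorem~\ref{thm:BP_differentiable} and Observation~\ref{obs:bppiece-wise}, via a \emph{local perturbation} argument: it replaces $\bfb$ by $\bfb+\epsilon\bfdelta$, sandwiches the perturbed optimal value $f_\epsilon$ between $f_0+\epsilon\bfdelta^{\top}\bfx_\epsilon$ and $f_0+\epsilon\bfdelta^{\top}\bfx_0$, reads off bounded Clarke directional derivatives (so the optimal value is piecewise linear in $\bfb$), and then argues coordinatewise by contradiction that the minimizer cannot change under small one-sided perturbations, so $q$ is piecewise constant. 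You instead argue \emph{globally}: since $f(\bfx;\bfb,\bfA)$ is affine in $\bfb$ for each of the finitely many candidates $\bfx\in\{0,1\}^k$, the optimal value is the lower envelope of $2^k$ affine functions, the minimizer is constant on each open polyhedral cell $R_i$ of the induced subdivision, and ties are confined to a measure-zero union of hyperplanes that you resolve with an explicit tie-breaking rule. Your version buys an explicit global decomposition of $\mathbb{R}^k$ and, importantly, addresses the well-definedness of $q$ when the minimizer is not unique --- a point the paper leaves implicit; the paper's perturbation route buys the quantitative directional sensitivity bounds of \cref{ineq4}, which it reuses to motivate the data-generation scheme (small noise in $\bfb$ rarely moves the solution). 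Both arguments reach the same conclusion that $q$ is piecewise constant, hence piecewise differentiable.
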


Let us assume  
that we have $n$ solved problems, for different observation vectors. Thus, we have observation-solution pairs
$\{ \bfb^{(j)}, \bfx_{{o}}^{(j)}\}$, $j=0,\ldots, n-1$.
 Then,  following Proposition \ref{prop:learnmap}, we suggest that 
we can approximate the function $q:\mathbb{R}^k \rightarrow {\mathbb B}^k$ via a neural network. If that was not the case, then by theory and experimental findings, learning $q$ would be a highly challenging task \citep{selmic2002neural, petersen2018optimal, sitzmann2020implicit}. Specifically, in the context of GNNs, the observed vector $\bfb$ and the solution $\bfx_{o}$ can be thought of as node features and labels, respectively. The graph connectivity, i.e., adjacency matrix, is defined by the sparsity of the matrix $\bfA$. Because our goal is to map every node to a binary value (0 or 1), the problem of learning the map $q$ can be formulated as a node classification problem, a common application of GNNs \cite{kipf2016semi}.

\subsection{Properties of the QUBO Problem}
\label{sec:BP_piece-wise}
As stated in Proposition \ref{prop:learnmap}, 
we seek to utilize a node classification GNN to approximate QUBO solutions. Therefore, and in light of our discussion about homophilic and heterophilic datasets in Section \ref{sec:relatedGNN}, it is important to understand the properties of the solutions that we wish to approximate. We now analyze the sensitivity of the solution $\bfx_{o}$ with respect to the observed vector $\bfb$.

\paragraph{The Landscape of the Map $q(\cdot)$.}
To justify the ability to learn a map, $q:\mathbb{R}^k \rightarrow {\mathbb B}^k$, from the observed vector $\bfb$ to the QUBO solution, $\bfx_o$, 
we need to understand the behavior of the function
$q(\bfb)$, and  its sensitivity to changes in $\bfb$. 
While such analysis is not known to us for the general case, there are specific problems with physical interpretation, as in \cref{ex:ising_sensitivityLearning}.

\begin{example}[The Ising model solution sensitivity is piece-wise constant]
\label{ex:ising_sensitivityLearning}
Consider the Ising model with constant coefficients \cite{duminil2016new,dziarmaga2005dynamics}, that reads:
\begin{eqnarray}
    \label{ising}
    I(\bfx, \bfA, \bfb) = \bfx^{\top} \bfA \bfx - b \bfx^{\top} \bfe,
\end{eqnarray}
where $\bfe$ is a vector of ones, the matrix $\bfA$ is binary and $b$ is a scalar.  
The solution, $\bfx_o$, remains fixed for a continuous range of $b$, and it changes rapidly at some $b_{c}$ that is called the critical value. That is, in this case of the Ising model, we have that $d\bfx_o/db$ is a piece-wise constant function.
\end{example}

To analyze the solution in a more general manner, we start by proving the following theorem:
\begin{theorem}
   \label{thm:BP_differentiable}
   The QUBO problem in \cref{eq:bp}, at its optimum, i.e., $f(\bfx_o)$, is piece-wise differentiable with respect to changes of the observed vector $\bfb$. 
\end{theorem}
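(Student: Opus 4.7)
The plan is to exploit the fact that the feasible set $\{0,1\}^k$ is finite, containing exactly $2^k$ points, and rewrite $f(\bfx_o;\bfb,\bfA)$ as the pointwise minimum of a finite collection of affine functions of $\bfb$. Concretely, enumerate the candidates as $\bfx^{(1)},\ldots,\bfx^{(2^k)}$ and define
\begin{equation*}
g_i(\bfb) \;=\; (\bfx^{(i)})^{\top}\bfA\,\bfx^{(i)} + (\bfx^{(i)})^{\top}\bfb, \qquad i=1,\ldots,2^k.
\end{equation*}
Each $g_i$ is affine in $\bfb$ (the quadratic term is a constant independent of $\bfb$, and the remaining term is linear in $\bfb$), hence smooth with gradient $\nabla_{\bfb} g_i(\bfb) = \bfx^{(i)}$. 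By definition, $f(\bfx_o;\bfb,\bfA) = \min_{i} g_i(\bfb)$, so the object of interest is the minimum of finitely many affine functions of $\bfb$.

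Next, I would invoke the standard fact that a pointwise minimum of finitely many affine functions is concave and piece-wise linear. The plan is to make this explicit by partitioning $\mathbb{R}^k$ into the polyhedral cells
\begin{equation*}
R_i \;=\; \bigl\{\bfb \in \mathbb{R}^k : g_i(\bfb) \le g_j(\bfb)\ \text{for all } j\bigr\},
\end{equation*}
and noting that $\mathbb{R}^k = \bigcup_i R_i$. On the interior of each $R_i$, the minimizing index is unique, and $f(\bfx_o;\bfb,\bfA) = g_i(\bfb)$ coincides with a single affine function, which is trivially differentiable with $\nabla_{\bfb} f(\bfx_o;\bfb,\bfA) = \bfx^{(i)}$. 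This yields precisely the piece-wise differentiable structure asserted.

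Finally, I would address the behavior on the cell boundaries. These boundaries are contained in the finite union of hyperplanes $\{\bfb : g_i(\bfb) = g_j(\bfb)\}$ for $i\neq j$, a set of Lebesgue measure zero; there the argmin is non-unique, and $f$ as a function of $\bfb$ is only directionally differentiable (the gradient jumps between the minimizers' $\bfx^{(i)}$ values). This exactly matches the behavior illustrated for the Ising model in \cref{ex:ising_sensitivityLearning}, where the ``critical value'' $b_c$ corresponds to such a hyperplane. The main subtlety — and the only place that requires a little care — is making the notion of ``piece-wise differentiable'' precise: I would state it as differentiability on the interior of each of the finitely many polyhedral regions $R_i$, with the non-differentiability set contained in a finite union of hyperplanes. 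No deeper obstacle arises, since finiteness of $\{0,1\}^k$ reduces the statement to elementary properties of minima of affine functions.
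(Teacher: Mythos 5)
Your proof is correct, and it takes a genuinely different route from the paper's. The paper argues locally: it perturbs $\bfb$ along a unit direction $\bfdelta$, uses the minimality of $\bfx_0$ and $\bfx_\epsilon$ to sandwich the difference quotient, $\bfdelta^{\top}\bfx_{\epsilon} \le (f_{\epsilon}-f_0)/\epsilon \le \bfdelta^{\top}\bfx_0$, and then invokes boundedness of binary vectors to conclude that the optimal value has bounded (Clarke) directional derivatives and is piece-wise linear. You instead argue globally and structurally: the optimal value is the pointwise minimum of the $2^k$ affine functions $g_i(\bfb)$, hence concave and piece-wise linear, smooth with gradient $\bfx^{(i)}$ on the interior of each polyhedral cell $R_i$, with non-differentiability confined to a finite union of hyperplanes. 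The two arguments are really two views of the same envelope-theorem fact — the paper's sandwich is exactly what you get by evaluating $g_{i_0}$ and $g_{i_\epsilon}$ at the perturbed point — but yours buys more: it makes ``piece-wise differentiable'' precise (the paper leaves that notion informal), identifies the pieces explicitly, and gives the gradient on each piece for free, which in fact already contains the content of the paper's subsequent Observation that the minimizer is piece-wise constant in $\bfb$. What the paper's local formulation buys in exchange is that its inequality chain is reused verbatim (with $\bfdelta=\bfe_j$) in the proof of that Observation. One small point worth making explicit in your write-up: uniqueness of the minimizing index on the interior of $R_i$ follows because two distinct binary vectors give affine functions with distinct gradients, so they can agree on an open set only if they coincide everywhere.
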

\begin{proof}
Let us define the function
\begin{equation}
\label{eq:fp}
f(\bfx, \epsilon) = 
\bfb^{\top} \bfx + \bfx^{\top} \bfA \bfx + \epsilon \bfdelta^{\top} \bfx.
\end{equation}
where $\|\bfdelta\|=1$ is a direction vector.
Let $\bfx_{\epsilon}$ be the minimizer of \cref{eq:fp} and let $f_{\epsilon} = f(\bfx_{\epsilon}, \epsilon)$. Then, by the definition of the minimizer, we have that for any $\epsilon$, the following holds:
\begin{equation}
\label{ineq1}
    f_0 + \epsilon \bfdelta^{\top} \bfx_{\epsilon} \le f(\bfx_{\epsilon},0) + \epsilon \bfdelta^{\top} \bfx_{\epsilon}  = f(\bfx_{\epsilon},\epsilon) = f_{\epsilon}
\end{equation}
Similarly, we have that:
\begin{equation}
\label{ineq2}
    f_{\epsilon} = f(\bfx_{\epsilon},\epsilon)  \le f(\bfx_0,\epsilon) = f(\bfx_0,0) + \epsilon \bfdelta^{\top} \bfx_{0} = f_0 + \epsilon \bfdelta^{\top} \bfx_{0}.
\end{equation}
This means that we can bound $f_{\epsilon}$ as follows:
\begin{equation}
\label{ineq3}
f_0 + \epsilon \bfdelta^{\top} \bfx_{\epsilon} \le f_{\epsilon} \le f_0 + \epsilon \bfdelta^{\top} \bfx_{0}.
\end{equation}
Subtracting $f_0$ from both sides and dividing by $\epsilon$ and assuming $0<\epsilon$ we obtain that
\begin{equation}
\label{eq:ineq4}
 \bfdelta^{\top} \bfx_{\epsilon} \le {\frac {f_{\epsilon} - f_0}{\epsilon}} \le   \bfdelta^{\top} \bfx_{0}.
\end{equation}
Because the norms of $\bfx_0$ and $\bfx_{\epsilon}$ are bounded (as they are binary) we obtain upper and lower bounds for the change in the optimal solution in the direction $\bfdelta$. This is the definition of Clarke derivative \cite{michel1992generalized}, which extends the definition of the derivative beyond a point to an interval.
The function $f(\bfx_o)$ is therefore piece-wise linear. 
\end{proof}

Theorem \ref{thm:BP_differentiable} shows that the QUBO problem in \cref{eq:bp} is piece-wise \emph{differentiable} with respect to $\bfb$, a property that we now utilize in order to characterize the QUBO solution behavior with respect changes in the observed vector $\bfb$.

\begin{observation} The QUBO $\bfx$ solution is piece-wise constant with respect to changes in $\bfb$.
    \label{obs:bppiece-wise}
\end{observation}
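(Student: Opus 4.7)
The plan is to turn the observation into a statement about the geometry of the preimages of the map $q:\bfb\mapsto \bfx_o(\bfb)$ and show that each such preimage has nonempty interior on which $\bfx_o$ does not vary. I would give one short direct argument and then note how Theorem~\ref{thm:BP_differentiable} gives the same conclusion as a corollary, since the paper has already built up that machinery.

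First, I would use finiteness of the feasible set. The minimization in \cref{eq:bp} is over $\{0,1\}^k$, which contains only $N=2^k$ candidates $\bfx^{(1)},\ldots,\bfx^{(N)}$. For each candidate $\bfx^{(i)}$ define
\begin{equation*}
    R_i \;=\; \bigl\{\bfb\in\mathbb{R}^k : (\bfx^{(i)})^{\top}\bfA\bfx^{(i)} + \bfb^{\top}\bfx^{(i)} \le (\bfx^{(j)})^{\top}\bfA\bfx^{(j)} + \bfb^{\top}\bfx^{(j)} \text{ for all } j\bigr\}.
\end{equation*}
Each $R_i$ is the intersection of $N-1$ closed half-spaces in $\bfb$ (the inequality is affine in $\bfb$ for fixed $\bfx^{(i)},\bfx^{(j)}$), hence a closed convex polyhedron. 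By construction $\bigcup_i R_i = \mathbb{R}^k$, and on the interior $R_i^{\circ}$ the candidate $\bfx^{(i)}$ is the unique minimizer, so $q(\bfb)=\bfx^{(i)}$ is constant on $R_i^{\circ}$. Ties occur only on the common boundaries $R_i\cap R_j$, which lie on a finite union of affine hyperplanes and therefore form a set of Lebesgue measure zero. This gives a partition of $\mathbb{R}^k$, up to a measure-zero set, into finitely many polyhedral cells on which $\bfx_o(\bfb)$ is constant, which is exactly the piece-wise constant statement.

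Second, I would point out that the same conclusion follows immediately from Theorem~\ref{thm:BP_differentiable}. For any unit direction $\bfdelta$ the bounds \eqref{ineq4} sandwich the slope of $f_{\epsilon}$ between $\bfdelta^{\top}\bfx_{\epsilon}$ and $\bfdelta^{\top}\bfx_0$. On any open interval of $\epsilon$ on which $f(\bfb+\epsilon\bfdelta)$ is affine, the two bounds must coincide in the limit $\epsilon\downarrow 0$, forcing $\bfdelta^{\top}\bfx_{\epsilon} = \bfdelta^{\top}\bfx_0$ throughout that interval; taking this over a basis of directions $\bfdelta$ pins down $\bfx_{\epsilon}=\bfx_0$ on each such interval. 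Combined with the fact that $\bfx$ takes only finitely many values, this again gives piece-wise constancy.

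The only genuinely delicate point is the treatment of the boundaries between the cells $R_i$, where the minimizer is non-unique and $q$ is multi-valued. I would handle this by stating the observation modulo a Lebesgue-null set (equivalently, by fixing an arbitrary tie-breaking rule so that $q$ is well-defined everywhere and constant on each $R_i^{\circ}$). The rest of the argument is purely book-keeping on finitely many affine inequalities, so I do not expect any further obstacle.
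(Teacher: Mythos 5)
Your first argument is correct, but it takes a genuinely different route from the paper. The paper does not use a cell decomposition: it specializes the sandwich inequality \eqref{ineq4} from Theorem~\ref{thm:BP_differentiable} to coordinate directions $\bfdelta=\bfe_j$, obtaining $\epsilon\,\bfx_{\epsilon_j}\le\epsilon\,\bfx_{0_j}$, and then argues by contradiction that the $j$-th coordinate of the minimizer cannot flip under a one-sided perturbation (it cannot jump from $0$ to $1$ when $\epsilon>0$, nor from $1$ to $0$ when $\epsilon<0$), so the one-sided directional derivatives of $q$ vanish and $q$ is piece-wise constant. Your polyhedral decomposition $\mathbb{R}^k=\bigcup_i R_i$ is more global and in some ways more complete: it exhibits the pieces explicitly as finitely many convex polyhedra (since $f(\bfx^{(i)};\bfb,\bfA)$ is affine in $\bfb$ for each of the $2^k$ candidates), shows $q$ is constant on each cell interior, and handles non-uniqueness of the minimizer cleanly by confining ties to a finite union of hyperplanes, a point the paper's proof leaves implicit. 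What the paper's argument buys in exchange is continuity with the machinery of Theorem~\ref{thm:BP_differentiable} and a quantitative, direction-by-direction picture that motivates the later perturbation-based data generation. One caveat on your second, supplementary argument: the step ``taking this over a basis of directions $\bfdelta$ pins down $\bfx_{\epsilon}=\bfx_{0}$'' does not go through as stated, because the perturbed minimizer $\bfx_{\epsilon}$ is a different object for each direction $\bfdelta$ (each $\bfdelta$ defines a different one-parameter family of problems), so the equalities $\bfdelta^{\top}\bfx_{\epsilon}=\bfdelta^{\top}\bfx_{0}$ for different basis vectors cannot be combined into a single linear system for one fixed $\bfx_{\epsilon}$. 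Since your first argument is self-contained, this does not affect the validity of the proposal, but the derivation-from-Theorem~\ref{thm:BP_differentiable} route would need to be repaired (essentially along the coordinate-wise, one-sided lines the paper actually takes) if you wanted to rely on it.
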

\begin{proof}
To study the effect of a change in $\bfb$ on the solution $\bfx_{\epsilon}$ we now choose
$\bfdelta = \bfe_j$, that is, the unit vector in direction $j$. This simplifies \Cref{eq:ineq4} to
\begin{eqnarray}
\label{ineq5}
 \epsilon   \bfx_{\epsilon_j} \le  \epsilon   \bfx_{0_j},
\end{eqnarray}
where the $j$ stands for the $j$-th entry in the vector.
Recall that $\bfx$ gets only binary values. If, $\epsilon>0$, $\bfx_{0_j} = 0$ and $\bfx_{0_j} \not= \bfx_{\epsilon_j}$ then we must have that
$\bfx_{\epsilon_j} = 1$, which yields a contradiction. This implies that 
$\bfx_{0_j} = \bfx_{\epsilon_j}, \forall \epsilon > 0$.

Furthermore, in this case $f_0 = f_\epsilon$ which implies that
$\bfx_0 = \bfx_\epsilon$ is a solution of the perturbed problem. Thus, we obtain that
\begin{equation}
\label{eq:eq10}
    {\frac {\delta \bfx_{0_{j}}}{\epsilon^+}} = 0 \quad {\rm if}\ \ \ \bfx_{0_j} = 0. 
\end{equation}
Similarly, if $\bfx_{0_j} = 1$ and $\epsilon < 0$
and $\bfx_{\epsilon_j} \not=\bfx_{\epsilon_j}$, then $\bfx_{\epsilon_j}=0$.
But this again yields a contradiction, which means
\begin{equation}
\label{eq:eq11}
    {\frac {\delta \bfx_{0_j}}{\epsilon^-}} = 0 \quad {\rm if}\ \ \ \bfx_{0_j} = 1.
\end{equation}
Because Equations (\ref{eq:eq10})--(\ref{eq:eq11}) show that the change in $\bfx$ is zero when changing b in $\epsilon^+$ and $\epsilon^-$. This implies that $\bfx$ remains constant for a wide range of perturbations in $\bfb$. Thus, by definition, $\bfx$ is piecewise constant with respect to $\bfb$.
 \end{proof}
 Below, in \cref{ex:landscape}, we provide a concrete example of this property, and showcase its piece-wise constant landscape behavior with respect to the vector $\bfb$.
\begin{example}
\label{ex:landscape}
   Let us randomly choose $\bfA \in \mathbb{R}^{20 \times 20}$ and $\bfb \in \mathbb{R}^{20}$. We compute the exact solution $\bfx_o$ using exhaustive search. We then choose two random orthogonal directions $\bfb_1$ and $\bfb_2$. Using these fixed vectors we find the solution
\begin{eqnarray}
\label{eq:stcut}
  \bfx(s,t) = {\rm argmin}_{\bfx} \left(\bfb + t \bfb_1 + s \bfb_2\right)^{\top} \bfx + \bfx^{\top} \bfA \bfx.
\end{eqnarray}
For each $\bfx(s,t)$ we compute the distance
\begin{eqnarray}
\label{eq:stcut_dist}
  \phi(s,t) = \| \bfx(s,t) - \bfx(0,0) \|^2. 
\end{eqnarray}
In Figure~\ref{fig:sens}, we plot $\phi$ as a function of $s$ and $t$ for four different experiments where the matrix, $\bfA$, the vector, $\bfb$ and the perturbations $\bfb_1$ and $\bfb_2$ are randomly chosen.
In all cases, we see that the solution is constant over a range of values and then sharply changes. Such surfaces are good candidates for deep network approximation. 
\end{example}

\begin{figure*}[t]
\centering
\begin{minipage}{.4\textwidth}
  \centering
    \centering
    \begin{tabular}{cc}
    \includegraphics[width=3.0cm]{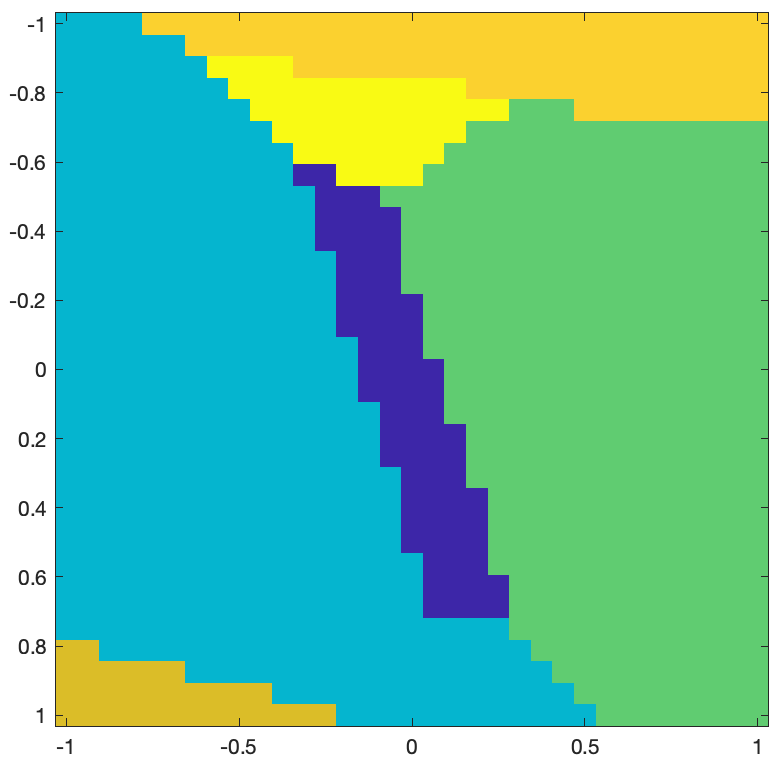}     &  \includegraphics[width=3.0cm]{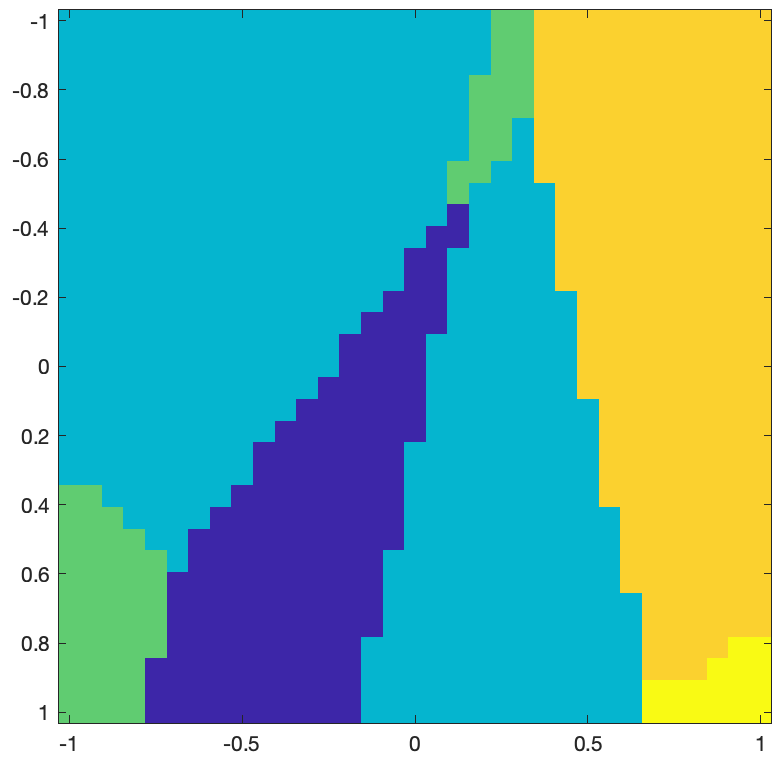}\\
    \includegraphics[width=3.0cm]{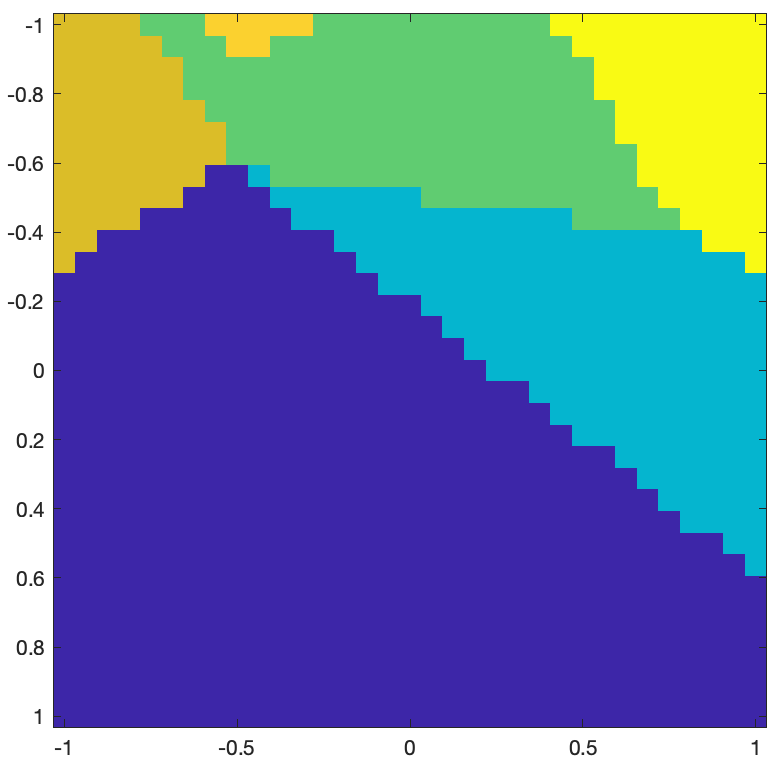}     & \includegraphics[width=3.0cm]{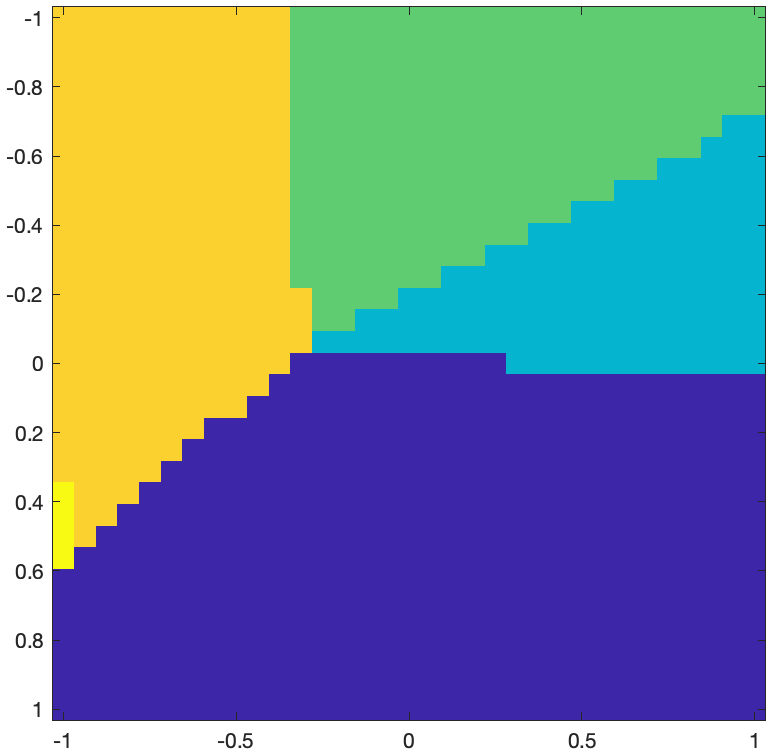}
    \end{tabular}
    \caption{The function $\phi(s,t)$ in \cref{eq:stcut_dist} for four different random realizations of $\bfA, \bfb, \bfb_1$ and $\bfb_2$.  $\phi(s,t)$ is piece-wise constant, favored by neural networks.}
    \label{fig:sens}
\end{minipage}%
\hspace{3em}
\begin{minipage}{.4\textwidth}
    \centering
    \begin{tabular}{cc}
    \includegraphics[width=2.97cm]{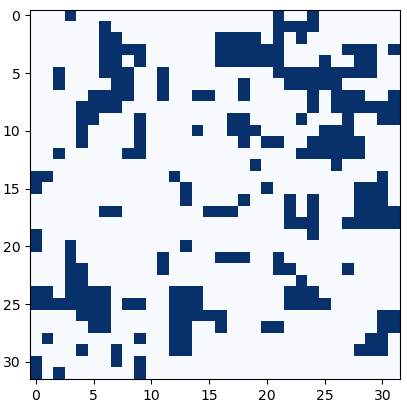}     &  \includegraphics[width=2.97cm]{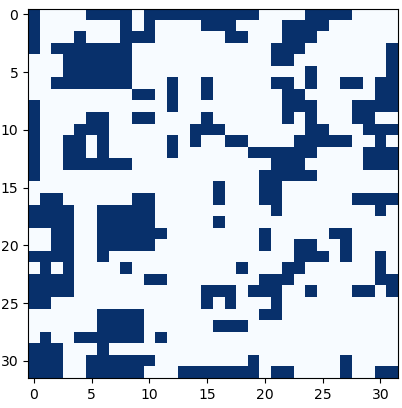}\\
    \includegraphics[width=2.97cm]{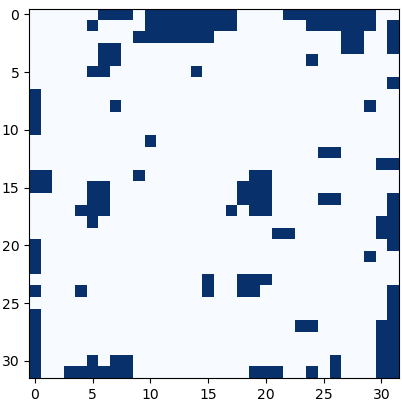}     & \includegraphics[width=2.97cm]{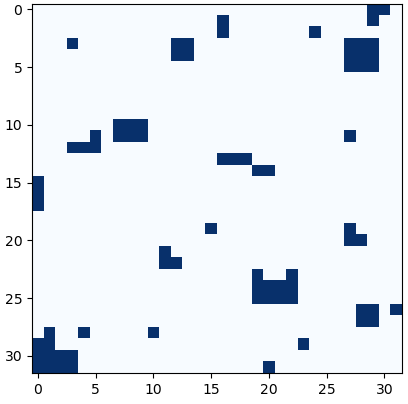}
    \end{tabular}
    \caption{The solution $\bfx_{o}$ for different random values of the vector $\bfb$. The graph in this example is a regular lattice where each node (pixel) $i,j$ is connected to its 4 neighbors.}
    \label{fig:sols}
\end{minipage}

\vspace{1em}
\end{figure*}

\section{Approximating QUBO Solutions with GNNs}
\label{sec:GNN}

Section \ref{sec:combToGraph} discusses the ability to learn the map $\bfx_o = q(\bfb)$, as well as the characteristics of this map. 
 In this section, we explore the behavior of the solution vector  $\bfx$, which hints about the needed type of map. We then propose an appropriate architecture to approximate the map.
\subsection{The Heterophilic Behavior of QUBO Solutions}
 \label{sec:heterophilic_solution}
While it is difficult to predict the exact distribution of the solution $\bfx$, a few simple cases have been studied in the context of the Ising model \cite{dziarmaga2005dynamics}. For this model, it is known that when the solution resides on a regular lattice, aside from trivial cases discussed in Appendix \ref{app:extremes}, it can exhibit high-frequency patterns. In graphs and GNNs, such solutions are commonly affiliated with heterophily \cite{Pei2020Geom-GCN:, choi2023gread}.
To demonstrate that, we solve four different problems on a lattice of size $32 \times 32$ using the SAB algorithm  \cite{goto2019combinatorial}
and plot the solutions in Figure \ref{fig:sols}. 

It is observed that the obtained solutions are irregular and contain small connected components of uniform values. This implies that typical smoothing-based GNNs 
may face difficulties in expressing such non-smooth patterns. Therefore, as we discuss in Section \ref{sec:QUBO-GNN}, we leverage the power of heterophily-designated GNNs, combined with a specialized QUBO-aware component.  

\subsection{QUBO Graph Neural Network}
\label{sec:QUBO-GNN}

In Section \ref{sec:heterophilic_solution} and Figure \ref{fig:sols}, we established that the spatial behavior of the solution of the QUBO problem, $\bfx_0$, is heterophilic.  In this section, we design a GNN that is capable of generating spatially non-smooth patterns while being aware of its application -- predicting approximations of QUBO solutions. 

\paragraph{GNN Backbone.} Recently, several heterophily-designated GNNs have been proposed. A non-exhaustive list includes \cite{choi2023gread,eliasof2023improving,eliasof2024RD,dirgnn_rossi_2023}. 
In this paper, we choose to use a reaction-diffusion GNN, which is similar to  GREAD \cite{choi2023gread}, discussed in Section \ref{app:architecture}. Our choice is inspired by Turing patterns \cite{Turing52}, which are complex and have relation to the Ising model \cite{turingIsing}, which is an instance of a QUBO problem, as discussed in Example \ref{ex:ising_sensitivityLearning}.  Note that, while other heterophilic designated architectures can be used, this is not the focus of our work. Rather, our contribution is drawing the link between GNNs, node classification, and QUBO. 

\paragraph{QUBO Objective Awareness.} While the chosen reaction-diffusion GNN is powerful, we show that it is important also to allow the network to be aware of the objective we seek to minimize, that is, the QUBO function, as shown in \cref{eq:bp}. 
To this end, note that the QUBO objective can be written as the sum of the nodal residual of the vector $\bfr = \bfx \odot (\bfA \bfx + \bfb)$, where $\odot$ is the Hadamard product, which is added to the features as follows:
\begin{equation}
    \label{eq:quboAware}
     \bfr^{(\ell)}(\bfh^{(\ell)}, \bfA, \bfb) = \bfh^{(\ell)} \odot (\bfA \bfh^{(\ell)} + \bfb),
\end{equation}
This modification allows the network to implicitly track the nodal QUBO loss, allowing the network to be aware of the optimization problem at hand. As we show in our experiments in  Section \ref{sec:experiments}, this addition improves the solution approximation and training convergence.

\paragraph{QUBO-GNN.} Given input features $\bfb$, we first embed them to obtain the initial features using a Multilayer Perceptron (MLP) encoder denoted by ${\rm{enc}}$, as follows:
\begin{equation}
    \label{eq:embedding}
    \bfh^{(0)} = {\rm{enc}}(\bfb) \in \mathbb{R}^{k \times d}.
\end{equation}
We denote a GNN backbone layer by $\rm{GNN^{(\ell)}}$, which takes as input the node features at the $\ell$-th layer, $\bfh^{(\ell)}$, and outputs updated node features $\bfh^{(l+1)}$, i.e.,:
\begin{equation}
    \label{eq:basicGNN}
    \bfh^{(l+1)} = {\rm{GNN}^{(\ell)}}(\bfh^{(\ell)}).
\end{equation}
As described in \cref{eq:quboAware}, we include QUBO-aware features. Thus, a QUBO-GNN layer reads:
\begin{equation}
    \label{eq:bpgnn}
        \bfh^{(l+1)} = {\rm{GNN}^{(\ell)}}(\bfh^{(\ell)}+g^{(\ell)}(\bfr^{(\ell)})),
\end{equation}
where $g$ is an MLP.
As discussed earlier, for our $\rm{GNN}$ implementation, we use a reaction-diffusion GNN, discussed In Section \ref{app:architecture}.
Since we consider a binary problem, after $L$ layers, we use a decoder (classifier) denoted ${\rm{dec}}$,   implemented by a linear layer to obtain the predicted  solution:
\begin{eqnarray}
\label{yFromu}
{\bfx_{p}} = {\rm{dec}}(\bfh^{(\ell)})\in \mathbb{R}^{k \times 1}.
\end{eqnarray}

\subsection{QUBO-GNN Architecture}
\label{app:architecture}

We now provide details on the implementation of our QUBO-GNN, which combines concepts of reaction-diffusion similar to \cite{ choi2023gread} and our proposed QUBO aware node features.

Overall, our QUBO-GNN is defined by the following ordinary differential equations (ODE) and initial conditions:
\label{eq:discDR}
\begin{eqnarray}
\label{eq:discDRa}
&& {\dot \bfh} =  -\bfL (\bfh + \bfr) \bfSigma(t) + f(\bfh, t; \bftheta(t)) \\
&& \bfr(\bfh, \bfA, \bfb; t) = q(\bfh \odot (\bfA \bfh + \bfb); t) \\
\label{eq:discDRb}
&& \bfh(t=0) = \bfh_0 = {\rm{enc}}(\bfb) 
\end{eqnarray}
Here, $\bfh(t)\in\mathbb{R}^{k \times d}$ is a hidden feature matrix with $d$ features that depends on the initial state $\bfh_0$, and the input node features $\bfb\in \mathbb{R}^{k \times 1}$. ${\dot \bfh = \frac{d\bfh(t)}{dt}}$ denotes the first-order derivative with respect to time. $\bfL$ is the graph Laplacian as in \cite{eliasof2021pde}. $\bfr$ is a continuous version of the QUBO objective aware features as shown in \cref{eq:quboAware}, and $q$ is an MLP. The initial state is obtained by embedding the input node features using a single layer MLP; $\bfh_{0} ={\rm{enc}}(\bfb)$. Also,
$\bfSigma(t) \geq 0$ is a diagonal matrix with non-negative learnable diffusion coefficients on its diagonal.
The reaction term  $f$ from Equation (\ref{eq:discDRa}) is a learnable point-wise non-linear function that depends on the current hidden state $\bfh(t)$. 

In order to obtain a graph neural layer from  Equation (\ref{eq:discDRa})-(\ref{eq:discDRb}), we use the forward Euler discretization method, as is common in GNNs \cite{chamberlain2021grand,eliasof2021pde}. Therefore, a QUBO-GNN layer is defined by the following update rules shown in Algorithm \ref{alg:architecture}.

\paragraph{Implementation.} Our code is implemented in Pytorch \cite{pytorch} and Pytorch-Geometric \cite{pyg2019}, and will be published together with the datasets upon acceptance. All the experiments were conducted on an Nvidia RTX-4090 GPU with 24GB of memory. 

\begin{algorithm}
\footnotesize
\caption{QUBO-GNN Layer}\label{alg:architecture}
\textbf{Input:} Node features $\bfh^{(\ell)} \in \mathbb{R}^{k\times d}$\\
\textbf{Output:} Updated node features $\bfh^{(l+1)} \in \mathbb{R}^{k \times d}$ 
\begin{algorithmic}[1]
\STATE Compute QUBO Aware node features: \\
$ \bfr^{(\ell)} = \bfh^{(\ell)} \odot (\bfA \bfh^{(\ell)} + \bfb) $
   \STATE Diffusion: \\
   $\bfh^{(l+1/2)} = \bfh^{(\ell)} - \epsilon \bfL(\bfh^{(\ell)}+g^{(\ell)}(\bfr^{(\ell)}))$.
   \STATE Reaction: \\ $\bfh^{(l+1)} = \bfh^{(l+1/2)} + \epsilon f(\bfh^{(l+1/2)}; \bftheta_{r}^{(\ell)}).$
   \STATE Return $\bfh^{(l+1)}$.
 \end{algorithmic}

\end{algorithm}

\section{Generating Training Data}
\label{sec:training}
In order to train the network, given a matrix $\bfA$, we require to have observation-solution pairs
$\{ \bfb^{(j)}, \bfx_{{o}}^{(j)}\}$, $j=0,\ldots, n-1$. Given these pairs, we use the binary cross-entropy loss 
function between $\bfx_p^{(j)}$, the predicted solution obtained from QUBO-GNN, and the ground-truth solution $\bfx_o^{(j)}$. 

\paragraph{Data Acquisition is Computationally Expensive.} An important aspect is how data pairs observation-solution pairs
$\{ \bfb^{(j)}, \bfx_{{o}}^{(j)}\}$ can be obtained. Because, as discussed throughout the paper, given an observation vector $\bfb^{(j)}$,  finding the optimal solution $\bfx_{{o}}^{(j)}$ is an NP-complete problem.
For small problems (e.g., $k \le 20$), one can use exhaustive search techniques.
This approach requires solving the optimization problem in order to obtain a training set. Clearly, such an approach cannot be used for large-scale problems. For example, if $k=1024$, one has to check $2^{1024}$ possible solutions.

\paragraph{Practical Data Pairs Generation.} Due to the computational challenge described above, we derive a novel data generation technique. Instead of choosing the observed vector $\bfb^{(j)}$ and then solving an NP-complete problem to find $\bfx_{{o}}^{(j)}$, it is well-known that it is much easier to find a vector $\bfb$, given the pair $(\bfA,\bfx)$ such that $\bfx$ is the minimizer of the QUBO problem \citep{conforti2014integer}. Thus, we choose $\bfx_{{o}}^{(j)}$ first, and only then find a corresponding $\bfb^{(j)}$. This procedure allows to create data pairs while avoiding the solution of an NP-complete problem, and we consider a relaxed optimization problem, where $\bfx$ is {\em continuous} in the interval $[0,1]$:
\begin{eqnarray}
    \label{optc}
    \min_{\bfx \in [0,1]} \bfx^{\top} \bfA \bfx + \bfx^{\top} \bfb. 
\end{eqnarray}
One popular way to solve this minimization problem is by the \emph{log barrier} approach \cite{nw}, which is obtained by adding the logarithmic terms  as follows:
\begin{eqnarray}
    \label{opt}
    \min \bfx^{\top} \bfA \bfx + \bfx^{\top} \bfb - \mu \sum \left(\log  \bfx + \log(1-\bfx)\right).
\end{eqnarray}
As shown in \citet{nw}, the conditions for a minimum are:
\begin{equation}    
 (\bfA + \bfA^{\top}) \bfx + \bfb - {\frac \mu {\bfx}} +
{\frac \mu {1-\bfx}} = 0.\end{equation}
Now, suppose that we pick $\bfx_o$ randomly to be $\{\epsilon, 1-\epsilon\}^k$, where $\eps$ is small and define $\bfb$ as follows:
\begin{equation}
\label{eq:addSigma}
   \bfb_o =  -(\bfA + \bfA^{\top}) \bfx_o  + {\frac \mu {\bfx_o}} -
{\frac \mu {1-\bfx_o}}, \quad 
\bfb = \bfb_o + \sigma^2 \bfz,
\end{equation}
where $\bfz \sim N(0, \bfI)$ is a random vector.
The vector $\bfb_o$ corresponds to an approximate solution to the problem in \cref{optc}, which is also the solution of the binary problem in \cref{eq:bp}.
By adding a noise term $\sigma^2 \bfz$ to the observed vector $\bfb_o$ in \cref{eq:addSigma},
the solution of the continuous problem
is now different from the solution of the discrete binary problem.
As explored in the previous section, the solution of the discrete problem varies little when $\sigma$ is not large. We therefore use a small number of Tabu search steps, starting from $\bfx_o$ to modify the solution if needed. While there is no guarantee that the solution is minimal, we have verified by comparison with exhaustive search solutions on small-scale problems that, indeed, we obtain the global minima every time. This approach, of choosing $\bfx_o$ rather than $\bfb$ allows us to quickly generate large amounts of data even for large-scale problems (with hundreds or thousands of decision variables), where finding the exact solution is computationally intractable. Our procedure for data generation is summarized in Algorithm \ref{alg:getdata}.
\begin{algorithm}[t]
\footnotesize
   \caption{Practical Data Generation}
   \label{alg:getdata}
       {\bfseries Input:} Problem size $k$, Variance $\sigma$, Matrix $\bfA$, parameter $\mu=10^{-3}$. \\
       {\bfseries Output:} Data pair $(\bfb, \bfx)$. 
\begin{algorithmic}[1]
   \STATE Randomly choose $\bfx_o \in \{\epsilon, 1-\epsilon\}^k$.
   \STATE Set $\bfb_{o} =  -(\bfA + \bfA^{\top}) \bfx_o  + {\frac \mu {\bfx_o}} -
{\frac \mu {1-\bfx_o}} $.
   \STATE Perturb $\bfb \leftarrow \bfb_o + \sigma^2 \bfz$, where $\bfz\sim N(0, \bfI)$
   \STATE Use up to ten Tabu search iterations to maintain efficiency, and solve \cref{eq:bp} starting from $\bfx_o$ to obtain improved an solution $\bfx$.
   \STATE Return $(\bfb,\bfx)$.
 \end{algorithmic}
\end{algorithm}

\section{Numerical Experiments}
\label{sec:experiments}

\begin{table*}[t]
\footnotesize
  \caption{A comparison between a naive exhaustive search method, classical methods, GNNs, and our QUBO-GNN. Inference time is denoted by IT.}
  \setlength{\tabcolsep}{2pt}
  \centering
  \begin{tabular}{lcccccccccccccc}
    \toprule Size $\rightarrow$
     & \multicolumn{3}{c}{Small ($k=10$)} & \multicolumn{3}{c}{Medium ($k=256$)} & \multicolumn{3}{c}{Large $(k=1024)$} \\ 
    \cmidrule(r){2-4} \cmidrule(r){5-7} \cmidrule(r){8-10}
    & \multirow{2}*{Acc(\%)} & Rel. QUBO & \multirow{2}*{IT(ms)} & \multirow{2}*{Acc(\%)} & Rel. QUBO & \multirow{2}*{IT(ms)} & \multirow{2}*{Acc(\%)} & Rel. QUBO & \multirow{2}*{IT(ms)} \\
   {Method $\downarrow$} &  & Objective &  & & Objective &  & & Objective &  \\
    \midrule
    \textbf{\textsc{Naive.}} \\
    Exhaustive & 100 & 0 & 8 & 100 & 0 & $10^7$days & 100 & 0 & $10^{302}$days \\
    \midrule
    \textbf{\textsc{Classical.}} \\
    TS & 98.2$_{\pm0.1}$ & 5.90$\cdot$$10^{-3}$ & 439 & 96.4$_{\pm0.3}$ & 4.72$\cdot10^{-3}$ & 2197 & 93.9$_{\pm0.2}$ & 4.58$\cdot10^{-3}$ & 4409 \\
    SAB & 100 & 0 & 298 & 91.4$_{\pm0.2}$ & 5.38$\cdot$$10^{-2}$ & 5340 & 89.2$_{\pm0.4}$ & 7.96$\cdot$$10^{-2}$ & 40007 \\
    \midrule
    \textbf{\textsc{GNN.}} \\
    GCN & 70.7${_{\pm2.4}}$ & 7.12$\cdot$$10^{-1}$ & 2.8 & 69.4$_{\pm2.8}$ & 5.03$\cdot$$10^{-1}$ & 2.9 & 62.7$_{\pm2.8}$ & 8.76$\cdot$$10^{-1}$ & 9.0 \\
    GAT & 71.3$_{\pm1.3}$ & 6.98$\cdot10^{-1}$ & 3.0 & 79.7$_{\pm1.4}$ & 4.38$\cdot10^{-1}$ & 3.2 & 65.5$_{\pm1.7}$ & 7.23$\cdot10^{-1}$ & 12.9 \\
    GIN & 74.4$_{\pm2.8}$ & 6.02$\cdot10^{-1}$ & 2.8 & 73.1$_{\pm1.5}$ & 4.19$\cdot 10^{-1}$ & 2.9 & 71.6$_{\pm1.7}$ & 6.20$\cdot 10^{-1}$ & 9.2 \\
    GCNII & 75.1$_{\pm3.4}$ & 5.67$\cdot10^{-1}$ & 2.6 & 87.1$_{\pm1.9}$ & 4.13$\cdot10^{-1}$ & 2.9 & 81.6$_{\pm2.1}$ & 5.08$\cdot10^{-1}$ & 9.1 \\
    Dir-GNN & 89.6$_{\pm0.6}$ & 3.74$\cdot10^{-2}$ & 2.8 & 90.0$_{\pm0.7}$ & 1.07$\cdot$$10^{-1}$ & 3.1 & 89.2$_{\pm0.4}$ & 9.62$\cdot$$10^{-2}$ & 10.3 \\
    \midrule
    \textbf{\textsc{Ours.}} \\
    QUBO-GNN & 98.2$_{\pm0.5}$ & 2.83$\cdot$$10^{-3}$ & 2.8 & 97.2$_{\pm0.4}$ & 5.31$\cdot$$10^{-3}$ & 3.0 & 93.7$_{\pm0.5}$ & 1.26$\cdot$$10^{-2}$ & 10.1 \\
    QUBO-GNN+TS & 98.4$_{\pm0.3}$ & 1.37$\cdot$$10^{-5}$ & 17 & 97.4$_{\pm0.3}$ & 1.21$\cdot$$10^{-5}$ & 194 & 94.5$_{\pm0.2}$ & 1.10$\cdot$$10^{-4}$ & 746 \\
    \bottomrule
  \end{tabular}
  \label{tab:main_results}
  \vspace{1em}
\end{table*}

In this section, we evaluate QUBO-GNN and other baselines to understand whether our theoretical observations on the behavior of the , characterized in Section \ref{sec:BP_piece-wise} and Section  \ref{sec:heterophilic_solution} -- hold in practice. 

\paragraph{Baselines.} We consider: (i) naive methods, i.e., exhaustive search, (ii) classical solvers such as TS \cite{glover1998tabu} and SAB \cite{goto2019combinatorial}, and (iii) GNN baselines, namely GCN \cite{kipf2016semi}, GAT\cite{velickovic2018graph}, GIN \citep{xu2018how},  GCNII \cite{chen20simple}, and Dir-GNN \cite{dirgnn_rossi_2023}. We chose to compare our method with those baselines as they are representative of node classification architectures from recent years. Also, Dir-GNN is heterophily designated, thus suitable to be tested with the heterophilic behavior of QUBO solutions, as discussed in Section  \ref{sec:heterophilic_solution}.

\paragraph{Metrics.} In our experiments, we consider three metrics: 

\begin{enumerate}
\item \textbf{Solution Accuracy (\%).} Accuracy is the fraction of correctly labeled nodes per example. Given a graph $G=(V,E)$ with $n$ nodes, predicted solution $\bfx_p$, and ground truth $\bfx_o$, it is defined as
\begin{equation}
    \label{eq:accuracy}
    {\rm Accuracy} = \frac{\sum_{i=0}^{n-1} \mathbf{1}_{\bfx_{o_i}=\bfx_{p_i}}}{n},
\end{equation}
where $\bfx_{o_i}$ and $\bfx_{p_i}$ are the $i$-th entries of $\bfx_o$ and $\bfx_p$, and $\mathbf{1}_{\bfx_{o_i}=\bfx_{p_i}}$ is 1 if equal and 0 otherwise. As discussed in Section \ref{sec:training}, the data-generation algorithm ensures, with very high probability \citep{conforti2014integer}, that $\bfx_o^{(j)}$ is the minimizer of the QUBO problem defined by $(\bfA,\bfb^{(j)})$. Hence, measuring accuracy in Section \ref{sec:experiments} on this dataset is reliable with high probability.

\item 
\textbf{Relative QUBO Objective.} Recall the QUBO objective from \cref{eq:bp}. At the optimal solution $\bfx_o$, its value reads:
\begin{equation}
      f_0 = f(\bfx_o; \bfb, \bfA) =  \bfx_{o}^{\top} \bfA \bfx_o + \bfx_{o}^{\top} \bfb.
\end{equation}
Similarly, we measure the QUBO objective at the predicted solution $\bfx_p$:
\begin{equation}
      f_p = f(\bfx_p; \bfb, \bfA) =  \bfx_{p}^{\top} \bfA \bfx_p + \bfx_{p}^{\top} \bfb.
\end{equation}
We then define the \emph{relative} QUBO objective as follows:
\begin{equation}
    \label{eq:relativeQubo}
    {\rm{RelQUBO}} = \frac{f_p - f_o}{|f_o|}
\end{equation}
Note that since $f_o$ is the optimal value in this minimization problem, it is always true that $f_p \geq f_o$, and therefore the lowest and best value of this metric is 0.

\item \textbf{Inference time.} We measure the time (ms) each approach (exhaustive search, classical solvers, other GNNs, and ours) takes to solve a QUBO, averaged over 100 problems of equal size per dataset on an Nvidia RTX-4090 GPU. For GNNs, unlike classical solvers \citep{schuetz2022combinatorial}, and following our approach of learning from QUBO examples to generalize solutions rather than solving from scratch, inference time excludes training, as standard in supervised neural networks. Thus, during inference, the network performs no additional optimization, but directly maps input $(\bfA,\bfb)$ to an approximate solution $\bfx$.

\end{enumerate}

\paragraph{Datasets.} Our experiments include three datasets generated by our procedure described in Section \ref{sec:training}, in varying sizes - small ($k=10$), medium ($k=256$) and large ($k=1024$). Note that the settings of $k=\{256,1024\}$ are considered highly challenging, as there are $2^{\{256,1014\}}$ possible solutions to the QUBO problem. We provide further details on the datasets in Appendix \ref{app:datasets}.

\paragraph{Training and Inference.}
Our training procedure utilizes the standard cross-entropy loss where the goal is to correctly classify nodes with the binary label of the QUBO solution $\bfx$.
For inference, we consider two modes to obtain an approximation of the solutions of QUBO problems: (1) \textbf{Neural.} As standard in neural networks, we employ QUBO-GNN to approximate the QUBO solution, and (2) \textbf{Hybrid.} We obtain a solution approximation using QUBO-GNN. This solution is considered an initial point in a classical solver. In our experiments, we used a small {\emph{fixed}} number (up to 10) iterations of TS \cite{glover1998tabu}, as discussed in Section \ref{sec:relatedBP}.

\paragraph{QUBO Solution Approximation.}
We compare the approximated solution using various baselines in  Table \ref{tab:main_results}, where we consider small, medium, and large problems. As discussed in \ref{sec:problemformulation}, we consider the case where the adjacency matrix $\bfA$ from \cref{eq:bp} is constant, while the observed vector $\bfb$ is variable. To ensure statistical significance, for each size, we generate 10 instances of the matrix $\bfA$ and report the average and standard deviation metrics in Table \ref{tab:main_results} and Appendix \ref{app:standard_deviations}. 

Our results indicate that: (i) Exhaustive search yields accurate solutions but is impractical for medium or large problems as the number of decision variables or nodes grows. (ii) Classical solvers achieve high accuracy on small problems, but their accuracy and efficiency decline as problem size increases. (iii) GNNs can generalize the QUBO problem solution by learning the map $q(\cdot)$, reducing inference time. Heterophily-designated GNNs like Dir-GNN and our QUBO-GNN offer higher accuracy and lower QUBO objectives. (iv) Combining neural approaches, such as our QUBO-GNN, with classical solvers like TS improves performance at a lower cost compared to classical solvers alone. Thus, our approach can serve as a strong prior for initializing classical solvers.

\begin{table}[t]
\footnotesize
    \centering
        \caption{The impact of adding the QUBO aware features from \cref{eq:quboAware} to the node features.
    }
    \begin{tabular}{ccccc}
    \toprule
        \multirow{3}*{{QUBO Features}}&  \multicolumn{3}{c}{{Large $(k=1024)$}}\\ 
       \cmidrule(r){2-4} 
      &  \multirow{2}*{Acc(\%)} & Rel. QUBO & Inference \\ & & Objective &  Time (ms)  \\   \midrule
        \xmark &  91.7$_{\pm0.8}$ & 6.18$\cdot10^{-2}$ &  9.3 \\ 
        \cmark & 93.7$_{\pm0.5}$ & 1.26$\cdot 10^{-2}$ & 10.1\\
         \bottomrule
    \end{tabular}
\label{tab:ablation_quboLoss}
\end{table}

\begin{figure}[t]
    \centering
    \includegraphics[width=0.7\linewidth]{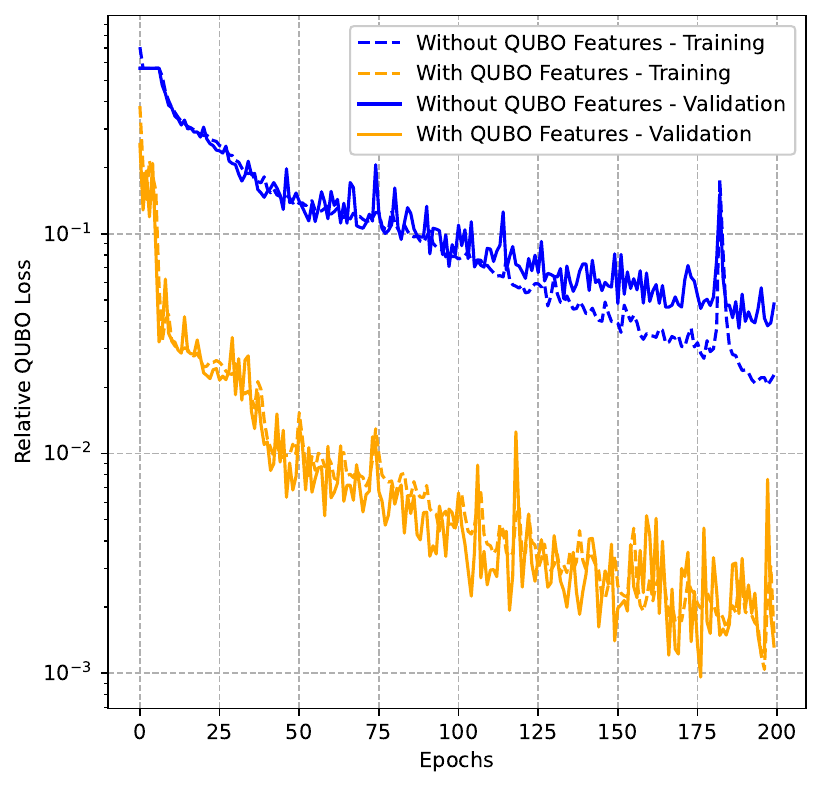}
    \caption{The \emph{relative QUBO loss} convergence in training and validation of QUBO-GNN on the small problem ($k=10)$, with and without the QUBO-aware node features from \cref{eq:quboAware}.}
    \label{fig:qubo_convergence}
    \vspace{3em}
\end{figure}

\paragraph{QUBO Aware Features Improve Performance.}
 Our QUBO-GNN incorporates QUBO-aware node features in \cref{eq:quboAware}. In Table \ref{tab:ablation_quboLoss} and in the Appendix, we examine the impact of QUBO aware features on the downstream performance. Including QUBO-aware features adds negligible computation time (less than a millisecond) while improving accuracy by an average of 1.7\%, reducing the relative QUBO objective by a factor of ~2. 
Furthermore, as we show in \cref{fig:qubo_convergence}, a beneficial property of the QUBO-aware features is their ability to accelerate the training convergence of the network.

\section{Conclusions}
\label{sec:summary}

This paper introduces a novel approach that merges graph learning with optimization problem-solving, offering a new perspective on approximating combinatorial problem solutions using GNNs. We analyze QUBO problems, revealing their piecewise constant sensitivity and heterophilic solutions, which guide the design of QUBO-GNN to approximate input-solution mappings.

We propose a scalable mechanism to efficiently generate input-solution data pairs without solving the NP-complete QUBO problem from scratch. Numerical results demonstrate QUBO-GNN's superior accuracy and computational efficiency compared to other GNN solutions and classical solvers.

    \section{Limitations and Open Challenges}
\label{sec:challenges}

Our experiments demonstrate the ability to train a network to solve the QUBO problem \cref{eq:bp} for a {\bf fixed} matrix $\bfA$ and a variable vector $\bfb$. Our approach, QUBO-GNN, generalizes to new and unseen $\bfb$ given fixed $\bfA$, a major step beyond \citet{schuetz2022combinatorial}, where a new network must be trained for every pair $(\bfA, \bfb)$. This setting, with fixed $\bfA$ and varying $\bfb$, arises frequently in practice, e.g., asset allocation in Finance \citep{herman2022survey} and computational biology \citep{marchetti2022quantum}. At the same time, while our theoretical and empirical findings establish this connection between GNNs and solving complex combinatorial problems, several aspects remain open. In particular, we identify the following research directions: 
\textbf{(i) Variable Adjacency Matrix $\bfA$.}Allowing $\bfA$ to vary will enable the modeling of different underlying graphs and problems, rather than only changes in $\bfb$. This is a well-studied challenge in non-neural approaches \cite{oshiyama2022benchmark} and remains open for neural methods; 
\textbf{(ii) Constrained Problems.} We focus on QUBO, an unconstrained subclass, but many practical problems involve constraints, e.g., knapsack \citep{kellerer2004multidimensional}, topology optimization \citep{sivapuram2018topology}, and protein design \citep{Strokach2020}. Extending GNN-based methods to handle such constraints is a key direction; and \textbf{(iii) Nonquadratic Problems.} Beyond QUBO, future work can target nonquadratic binary optimization. For instance, \cite{murray2010algorithm} proposed methods for protein design \citep{allouche2014computational} and telecommunication optimization \citep{resende2008handbook}. Extending neural architectures to this broader class would further expand our framework’s applicability.

\bibliography{biblio}

\clearpage

\appendix
\onecolumn

\section{Datasets}
\label{app:datasets}
We create 3 datasets: small ($k=10$), medium ($k=256$), and large ($k=1024$). For the small dataset, we generate exact solutions using an exhaustive search, with 1 million examples. For the small and medium examples, we generate 1000 examples using the procedure described in \cref{alg:getdata}. Since we consider problems where $\bfA$ is fixed but the observed-vector $\bfb$ is variable, we repeat the process of generating datasets 10 times, so that we have examples on several randomly generated matrices $\bfA$. The results in Section \ref{sec:experiments} are averaged across the 10 datasets, following the common experimental settings in \citet{xu2018how}. In all cases, we employ a random training/validation split of 80\%/20\%, respectively.

For the medium and large datasets, we chose $\bfA$ to be the matrix that discretizes the 2D Laplacian on a lattice.
Such matrices often appear in thermodynamics calculations \cite{EvansPDE}. The data is generated using algorithm~\ref{alg:getdata}. In order to make the problem non-trivial, we choose $\sigma=4$, which makes roughly $5\%$ of the entries in $\bfx_o$ change. We have used $1000$ Tabu search steps in step (4) of the algorithm to approximate $\bfx$.

\section{Hyperparameters}
\label{app:hyperparams}
In our experiments we consider the following hyperparameters for all GNN methods, including ours: ${\rm{learning \textunderscore rate}}\in \{1e-5,1e-4,1e-3\}$, ${\rm{weight} \textunderscore decay}\in \{1e-5,1e-4,0\}$, ${\rm{dropout}} \in \{0, 0.1, 0.5\}$. We train each network for up to 200 epochs. Dropout layers, if used, are applied after each hidden layer.
We used the Adam \cite{kingma2014adam} optimizer in all experiments. 

For the classical solvers, we used up to 1000 iterations, with early stopping if the objective function does not improve after 50 iterations, with both TS and SAB methods. We start the algorithms from an initial guess of zeros, as is standard. 

\section{Additional Results and Discussions}
\label{app:additional_results}

\subsection{Trivial Cases of QUBO Solutions.}
\label{app:extremes}
The minimization of $ \bfx^\top \bfA \bfx$ can result in a solution that is either (i) all zeros, if the entries of A are all positive, (ii) all ones, if the entries are all negative, or (iii) a mix of ones and zeros which tend to be heterophilic. This type of behavior is known as phase transitions. The extreme cases (where all the values in $A$ are uniquely signed) are easy and are usually solved via rule-based mechanisms because they yield the trivial solution (see \citet{beasley1998heuristic}). However, the heterophily designated GNNs in our experiments are known to handle both homophilic and heterophilic datasets. More critically, the difficult cases, as in the examples in our experiments, are in the phase transition zones, where indeed, the solution is heterophilic, as known from experiments in the literature and as analyzed in our paper.
\subsection{Standard Deviation of the Relative QUBO Loss}
\label{app:standard_deviations}
Due to space constraints, we provide the obtained relative QUBO loss standard deviations of the various methods in Table \ref{tab:standard_deviations}.

\begin{table*}[t]
  \caption{The Relative QUBO loss (lower is better) average and standard deviations over 10 experiments, using various naive exhaustive search methods, classical methods, GNNs, and our QUBO-GNN.}

  \setlength{\tabcolsep}{4pt} 
  \centering
  \begin{tabular}{lccccccccccc}
    \toprule
     Size $\rightarrow$ & {Small ($k=10$)} & {Medium ($k=256$)} &{Large $(k=1024)$} \\
     Method $\downarrow$ \\
    \midrule
    \textbf{\textsc{Classical solvers.}} \\
    TS & $5.90\cdot10^{-3}\pm1.45\cdot10^{-6}$ &   $4.72\cdot10^{-3}\pm5.38\cdot10^{-6}$  &  $4.58\cdot10^{-3} \pm$2.25$\cdot10^{-5}$  \\
    SAB &  0 &   $5.38\cdot10^{-2}\pm3.91\cdot10^{-4}$ &   $7.96\cdot10^{-2}\pm7.23\cdot10^{-4}$ \\
    \midrule
    \textbf{\textsc{GNN Baselines.}} \\
    GCN &   $7.12\cdot10^{-1}\pm1.29\cdot10^{-2}$ &   $5.03\cdot10^{-1}\pm3.60\cdot10^{-2}$  &  $8.76\cdot10^{-1}\pm7.21\cdot10^{-2}$  \\
    GAT &   $6.98\cdot10^{-1}\pm3.96\cdot10^{-2}$  & $4.38\cdot10^{-1}\pm5.78\cdot10^{-2}$ & $7.23\cdot10^{-1}\pm8.35\cdot10^{-2}$  \\
        GIN &  $6.02\cdot10^{-1}\pm5.10 \cdot 10^{-2}$&  $4.19\cdot 10^{-1}\pm 5.32 \cdot10^{-2}$ & $6.20 \cdot 10^{-1}\pm6.73\cdot10^{-2}$  \\
    GCNII   & $5.67\cdot10^{-1}\pm1.10\cdot10^{-2}$ &   $4.13\cdot10^{-1}\pm2.17\cdot10^{-2}$  &  $5.08\cdot10^{-1}\pm4.93\cdot10^{-2}$ \\
    Dir-GNN &  $3.74\cdot10^{-2}\pm5.85\cdot10^{-3}$  &  $1.07\cdot10^{-1}\pm9.91\cdot10^{-3}$  &  $9.62\cdot10^{-2}\pm9.97\cdot10^{-3}$ \\
    \midrule
    \textbf{\textsc{Ours.}} \\
    QUBO-GNN &  $2.83\cdot10^{-3}\pm4.48\cdot10^{-4}$ &  $5.31\cdot10^{-3}\pm5.92\cdot10^{-4}$  &  $1.26\cdot10^{-2}\pm5.47\cdot10^{-3}$  \\
    QUBO-GNN + TS  & $1.37\cdot10^{-5}\pm2.77\cdot10^{-6}$  &  $1.21\cdot10^{-5}\pm3.81\cdot10^{-6}$ &  $1.10\cdot10^{-4}\pm7.09\cdot10^{-6}$ \\
    \bottomrule
  \end{tabular}
  \label{tab:standard_deviations}
\end{table*}

\subsection{The Importance of QUBO-Aware Features}
\label{app:ablation}

We now provide additional results on the importance of including the QUBO-aware features, on small and medium-sized problems, in Tables \ref{tab:ablation_quboLoss_small} and \ref{tab:ablation_quboLoss_medium}, respectively. As discussed in \cref{sec:experiments}, the results indicate that including the QUBO-aware features improves accuracy and the relative QUBO objective.

\begin{table}[t]
\footnotesize
    \centering
        \caption{The impact of adding the QUBO aware features from \cref{eq:quboAware} to the node features on small sized problems ($k=10$).}

    \begin{tabular}{ccccc}
    \toprule
        \multirow{3}*{{QUBO Features}}&  \multicolumn{3}{c}{{Small $(k=10)$}}\\ 
       \cmidrule(r){2-4} 
      &  \multirow{2}*{Acc(\%)} & Rel. QUBO & Inference \\ & & Objective &  Time (ms)  \\     \midrule
        \xmark &  97.0$_{\pm0.6}$ & 2.18$\cdot 10^{-2}$ &  2.7 \\ 
        \cmark & 98.2$_{\pm0.5}$ & 2.83$\cdot10^{-3}$ & 2.8\\
         \bottomrule
    \end{tabular}
    \label{tab:ablation_quboLoss_small}
\end{table}

\begin{table}[t]
\footnotesize
    \centering
        \caption{The impact of adding the QUBO aware features from \cref{eq:quboAware} to the node features on medium sized problems ($k=256$).}

    \begin{tabular}{ccccc}
    \toprule
        \multirow{3}*{{QUBO Features}}&  \multicolumn{3}{c}{{Medium $(k=256)$}}\\ 
       \cmidrule(r){2-4} 
      &  \multirow{2}*{Acc(\%)} & Rel. QUBO & Inference \\ & & Objective &  Time (ms)  \\     \midrule
        \xmark &  95.9$_{\pm0.4}$ & 6.14$\cdot 10^{-2}$ &  2.9 \\ 
        \cmark & 97.2$_{\pm0.5}$ & 5.31$\cdot 10^{-3}$ & 3.0\\
         \bottomrule
    \end{tabular}
    \label{tab:ablation_quboLoss_medium}
\end{table}

\end{document}